\newtheorem{theorem}{Theorem}
\newcommand{\algorithmName}{SMLR\xspace}
\newcommand{\X}{X}
\newcommand{\Xf}{\X_{\text{free}}}
\newcommand\R{\mathbb{R}}
\def\Xtop{\X_{\text{top}}}
\def\Xcur{\X_{\text{cur}}}
\newcommand{\Xk}{\X_{k}}
\newcommand{\Xkk}{\X_{k-1}}
\newcommand{\G}{G}
\def\Gtop{\G_{\text{top}}}
\def\PriorityQueue{\ensuremath{\mathbf{X}}}
\def\x{x}
\def\xi{\x_I}
\def\xg{\x_G}
\def\xr{\x_{\text{rand}}}
\def\fiber{F}
\def\xf{\x_{\fiber}}
\def\xb{\x_{B}}
\def\visRegion{\ensuremath{\delta}}
\def\bias{\visRegion_{\text{bias}}}
\begin{document}

\title{\Huge Sparse Multilevel Roadmaps \\for High-Dimensional Robot Motion Planning}

\author{Andreas Orthey$^{1}$
and Marc Toussaint$^{1,2}$%
}

\twocolumn[{%
\begin{@twocolumnfalse}
\centering
\maketitle
\includegraphics[width=0.32\textwidth,valign=m]{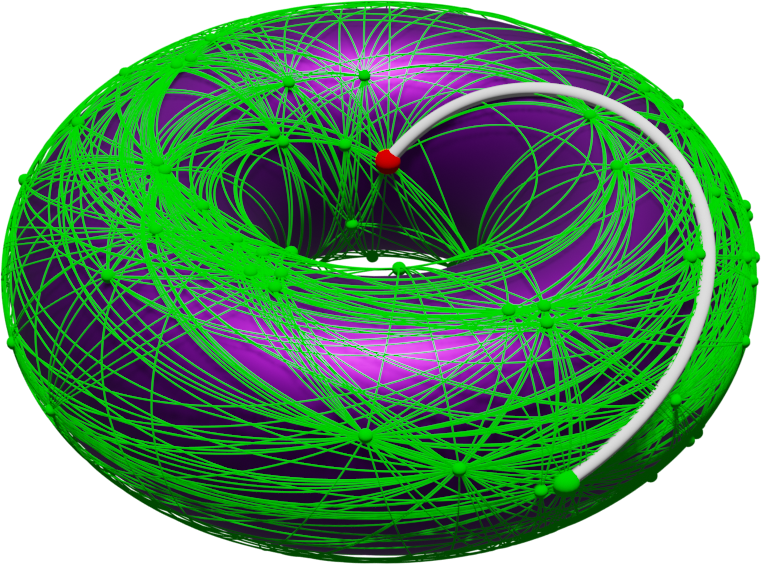}
\includegraphics[width=0.32\textwidth,valign=m]{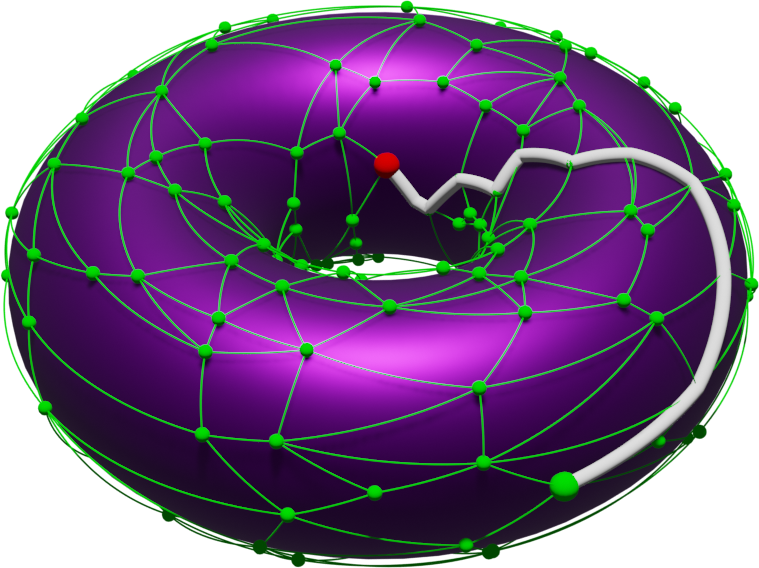}
\includegraphics[width=0.32\textwidth,valign=m]{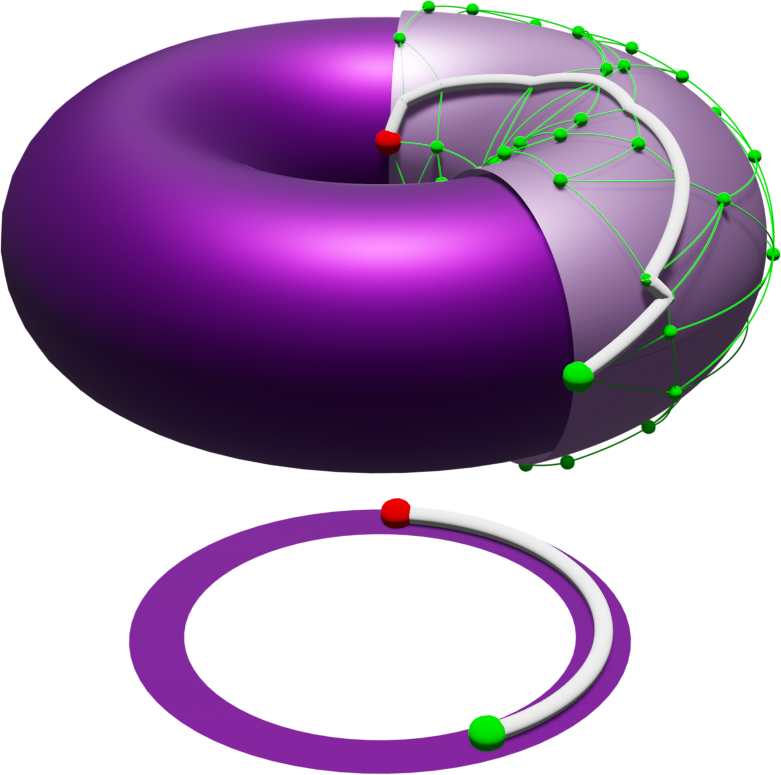}
\captionof{figure}{We generalize sparse roadmaps to fiber bundles. Here, we demonstrate this idea on the Torus $T^2 = S^1 \times S^1$ with $S^1$ being the circle. {\textbf{Left}}: Dense roadmap using probabilistic roadmap planner \cite{Karaman2011}. {\textbf{Middle}}: Sparse roadmap using sparse roadmap spanner \cite{dobson_2014}. {\textbf{Right}}: Sparse multilevel roadmap on fiber bundle $T^2 \rightarrow S^1$ using our algorithm (SMLR), which restricts sampling based on information from the lower-dimensional space $S^1$.\label{fig:pullfigure}}
\end{@twocolumnfalse}
}]

{
  \footnotetext[1]{Max Planck Institute for Intelligent Systems, Stuttgart, Germany. Marc Toussaint thanks the MPI-IS for the Max Planck Fellowship.}%
  \footnotetext[2]{Technical University of Berlin, Berlin, Germany  {\tt\footnotesize \{aorthey\}@is.mpg.de}, {\tt\footnotesize \{toussaint\}@tu-berlin.de}}
}

\begin{abstract}

Sparse roadmaps are important to compactly represent state spaces, to determine problems to be infeasible and to terminate in finite time. However, sparse roadmaps do not scale well to high-dimensional planning problems. In prior work, we showed improved planning performance on high-dimensional planning problems by using multilevel abstractions to simplify state spaces. In this work, we generalize sparse roadmaps to multilevel abstractions by developing a novel algorithm, the sparse multilevel roadmap planner (SMLR). To this end, we represent multilevel abstractions using the language of fiber bundles, and generalize sparse roadmap planners by using the concept of restriction sampling with visibility regions. We argue SMLR to be probabilistically complete and asymptotically near-optimal by inheritance from sparse roadmap planners. In evaluations, we outperform sparse roadmap planners on challenging planning problems, in particular problems which are high-dimensional, contain narrow passages or are infeasible. We thereby demonstrate sparse multilevel roadmaps as an efficient tool for feasible and infeasible high-dimensional planning problems.

\end{abstract}
\section{Introduction}

Sparse roadmaps \cite{dobson_2014} are essential in motion planning tasks to reduce model complexity and terminate motion planning in finite time, thereby providing (probabilistic) infeasibility proofs. Such infeasibility proofs are essential if we like to use a motion planner as building block for larger action skeletons \cite{Kaelbling2011} or symbolic planning systems \cite{Toussaint2018}. However, sparse roadmaps often operate on the full state space of the robot(s), thereby taking too much time to converge---making them often inapplicable for higher-dimensional systems.

To address this problem, we propose to use sparse roadmaps \cite{dobson_2014} in conjunction with multilevel abstractions of the state space \cite{Orthey2020IJRR}. By exploiting multilevel abstractions---which we model using fiber bundles \cite{steenrod_1951}---we can often terminate the algorithm significantly faster than state-of-the-art sparse roadmap planners operating on the full state space. 

While multi-resolution roadmaps exists \cite{Ichnowski2019, Saund2020}, we are not aware of any algorithm to compute sparse roadmaps over multilevel abstractions. We therefore believe to be the first to combine both concepts into one concise algorithm. Let us summarize our contributions as follows. 
\begin{enumerate}
    \item We present the Sparse MultiLevel Roadmap planner (\algorithmName), which generalizes sparse roadmaps \cite{dobson_2014} to efficiently exploit fiber bundle structures \cite{Orthey2020IJRR}
    \item We evaluate \algorithmName on eight challenging feasible and infeasible motion planning problems involving high-dimensional state spaces up to $34$-degrees of freedom (dof)
\end{enumerate}
\section{Related Work}

We review two aspects of (sampling-based) motion planning \cite{lavalle_2006}. First, we discuss multilevel motion planning, where we plan over multiple levels of abstraction. Second, we discuss sparse roadmaps on general state spaces. We will investigate both topics in detail in Sec. \ref{sec:background}.

\subsection{Multilevel Motion Planning}

To efficiently solve high-dimensional motion planning problems, we can use the framework of multilevel motion planning 
\cite{Ferbach1997, Sekhavat1998, Reid2020, Vidal2019, Orthey2020IJRR}, where (admissible) lower-dimensional projections are used to simplify the state space of a robot. We can construct multilevel abstractions either manually \cite{Reid2019, Orthey2019} or learn them from data \cite{Ichter2019, Brandao2020}. Our approach is complementary, in that we assume a multilevel abstraction to be given and we concentrate on computing sparse roadmaps over those abstractions.

Once we fix a multilevel abstraction, we can utilize classical motion planning algorithms to exploit them. A popular choice is the rapidly-exploring random tree algorithm \cite{Kuffner2000}, which we can generalize to selectively grow samples towards regions informed by lower-dimensional abstractions \cite{Ichter2019, Orthey2019} or workspace information \cite{Rickert2014}. While algorithms often show speed-ups of two to three orders of magnitude \cite{Rickert2014, Tonneau2018}, they usually lack guarantees on asymptotic optimality \cite{Karaman2011}. There are, however, two planner which provide those guarantees. First, the quotient-space roadmap planner (QMP*)
\cite{Orthey2020IJRR, Orthey2018}, which generalizes the probabilistic roadmap planner (PRM*) \cite{Karaman2011}. Second, the hierarchical bi-directional fast marching tree (HBFMT*) \cite{Reid2019, Reid2020}, which generalizes the fast marching trees algorithm (FMT*) \cite{Janson2015}. While both guarantee asymptotic optimality \cite{Orthey2020IJRR, Reid2020}, they support, however, either only euclidean spaces \cite{Reid2020} or rely on dense roadmaps \cite{Orthey2020IJRR}. Our approach differs significantly, in that we are the first to compute sparse roadmaps over general multilevel abstractions---while providing guarantees on asymptotic near-optimality.

\subsection{Sparse Roadmaps}

The history of sparse roadmaps essentially begins with the pioneering work by Sim{\'e}on et al. \cite{Simeon2000}, who were the first to prune states based on visibility regions. With visibility regions, we try to find a minimal set of states from which the full state space is visible, similar to the concept of guards in the art gallery problem \cite{Orourke1987}. However, visibility roadmaps often sacrifice on path quality. As remedies, we could introduce cycles \cite{Schmitzberger2002, Nieuwenhuisen2004} or use edge visibility \cite{jaillet_2008} to improve path quality. 

While cycles and edge visibility can improve path quality, there are no guarantees on optimality. This changed with the advent of near-optimal sparse roadmaps \cite{Marble2013}. Using dense asymptotic optimal roadmaps \cite{Karaman2011}, we can use graph spanners to sparsify a dense roadmaps while providing guarantees on path quality. We can achieve this by either removing edges \cite{Marble2013, Wang2015} or edges and vertices \cite{Salzman2014}. Computing dense roadmaps before sparsification is, however, computationally expensive. Later work introduces incremental sparse graph spanners, with which we can remove dependence on dense roadmaps altogether \cite{dobson_2014}. Our work is complementary to sparse graph spanners, in that we also use incremental sparse graph spanners \cite{dobson_2014}. We differ, however, in building not one, but multiple sparse roadmaps on different abstraction levels. 

When using sparse roadmaps, we often face the problem of explicitly defining a visibility or connection radius to define the sparseness of the graph.
To handle this trade-off between optimality and efficiency, we can often create multi-resolution roadmaps \cite{Du2020}. Multi-resolution roadmaps are sets of roadmaps which differ in how sparse they are. To vary roadmap sparsity, we could change the connection radius \cite{Saund2020} or we can selectively remove edges, either evenly distributed \cite{Ichnowski2019} or based on a reliability criterion \cite{Murray2020}. To exploit those multi-resolution roadmaps, we could plan on the highest resolution roadmap and selectively refine the roadmap whenever we hit an obstacle \cite{Saund2020}. Such a strategy is efficient, because solutions on sparser roadmaps act as admissible heuristics for planning \cite{Aine2016, Du2020}. While multi-resolution roadmaps exist on the same state space, our approach is complementary, in that we create sparse multilevel roadmaps on different state spaces, whereby each state space represents a relaxed planning problem.
\section{Background\label{sec:background}}

\begin{figure*}[hbt!]
    \centering
    \begin{subfigure}[t]{0.33\textwidth}
    \centering
        \vskip 0pt
        \includegraphics[width=\textwidth]{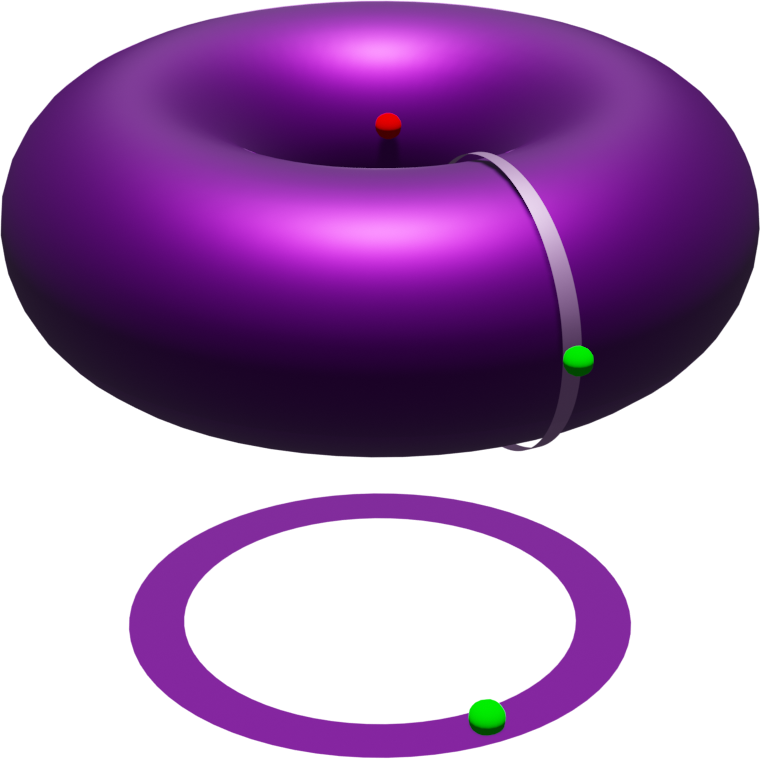}
        \caption{Fiber over base point.\label{fig:restriction:fiber}}
    \end{subfigure}\hfill
    \begin{subfigure}[t]{0.33\textwidth}
    \centering
        \vskip 0pt
        \includegraphics[width=\textwidth]{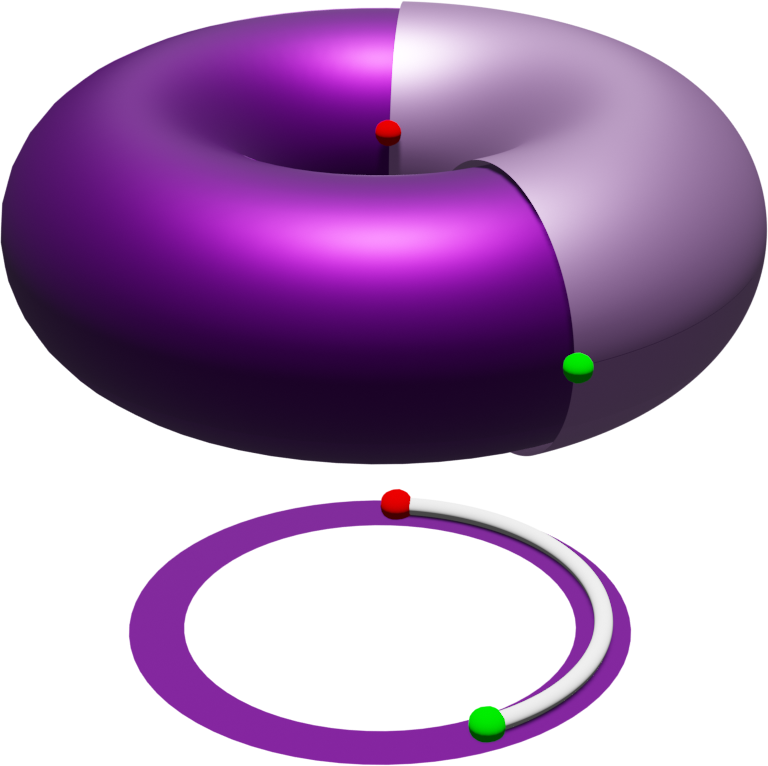}
        \caption{Path restriction over base path.\label{fig:restriction:path}}
    \end{subfigure}\hfill
    \begin{subfigure}[t]{0.33\textwidth}
    \centering
        \vskip 0pt
        \includegraphics[width=\textwidth]{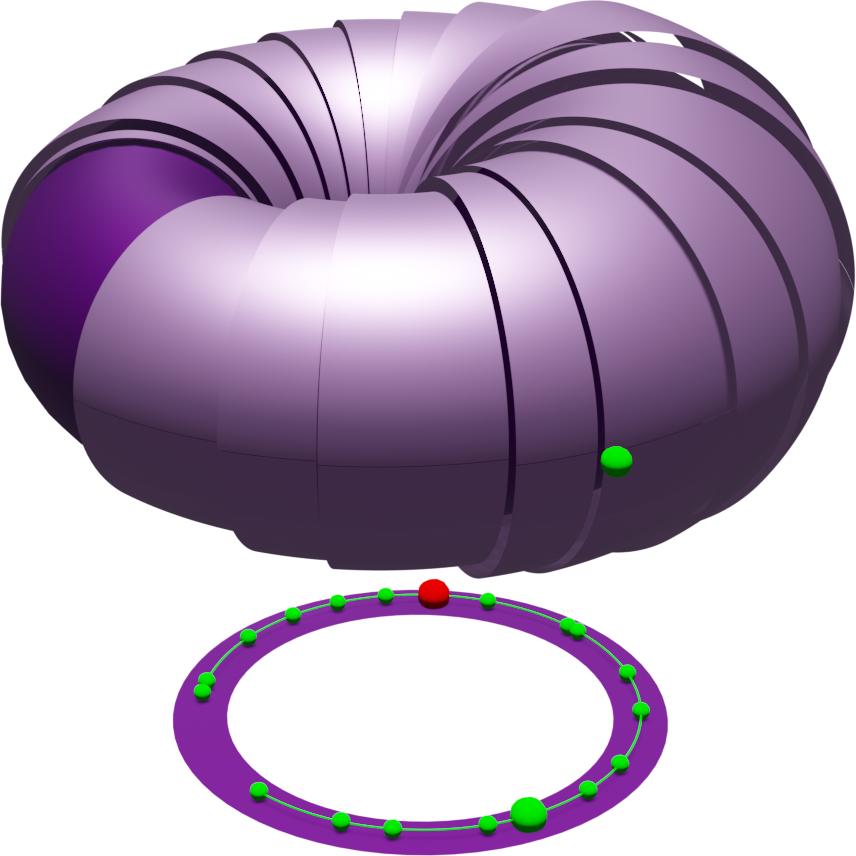}
        \caption{Graph restriction over sparse base graph.\label{fig:restriction:graph}}
    \end{subfigure}\hfill
    \caption{Fiber bundle restrictions on the fiber bundle $T^2 \rightarrow S^1$ with $T^2$ being the torus and $S^1$ being the circle. See text for clarification.\label{fig:restriction}}
\end{figure*}

We develop an algorithm which grows sparse roadmaps over fiber bundles to efficiently exploit high-dimensional planning problems. As background for this task, we review the topics of optimal motion planning, multilevel abstractions (modelled using fiber bundles) and sparse roadmaps.

\subsection{Optimal Motion Planning}

Let $\X$ be an $n$-dimensional state space and let $\xi$ and $\xg$ be two states in $\X$ which we call the initial and the goal state. To each state space, we associate a metric function $d: \X \times \X \rightarrow \R$ and a constraint function $\phi: \X \rightarrow \{0,1\}$ which evaluates to zero if a state is feasible and to one otherwise. The state space thus splits into two components, the constraint-free subspace $\Xf = \{x \in \X \mid \phi(x) = 0\}$ and its complement. We define the optimal motion planning problem as the tuple $A = (\Xf, \xi, \xg, J)$, which requires us to design an algorithm to find a continuous path from $\xi$ to $\xg$ while (1) staying exclusively inside $\Xf$ and (2) minimizing the cost functional $J$ which maps paths in $\Xf$ to real numbers. 

We define a motion planning algorithm (a planner) as a mapping from $A$ to a path through $\Xf$. A planner can have different desirable properties. First, we like a planner to be \emph{probabilistically complete}, meaning the probability of finding a solution path if one exists approaches one as time goes to infinity. Second, we like a planner to be \emph{asymptotically near-optimal}, meaning the probability of finding a path is at least $\epsilon$ worse than the optimal solution path (under cost functional $J$). Third, we like a planner to be \emph{asymptotically sparse}, meaning the probability of adding new nodes and edges converges to zero if time goes to infinity \cite{dobson_2014}. 

\subsection{Multilevel Motion Planning}

Because state spaces are often too high-dimensional to plan in, we use multilevel abstractions which we model using fiber bundles \cite{steenrod_1951, lee_2003}. A fiber bundle is a tuple $(\X, B, F, \pi)$, consisting of a bundle space $\X$, a base space $B$, a fiber space $F$ and a projection mapping $\pi$ from $\X$ to B. We assume that both state space and base space have associated constraint functions $\phi$ and $\phi_B$ and that the projection mapping $\pi$ is admissible w.r.t. the constraint functions, i.e. $\phi_B(\pi(x)) \leq \phi(x)$ for any $x$ in $\X$ \cite{Orthey2019}. The admissibility condition ensures that we preserve feasible solution paths under projection. While we exclusively use product spaces in this work, we model them using fiber bundles since they provide a useful vocabulary (restrictions and sections) and since they are required for extensions to task-space projections.

Our approach uses the following three concepts. First, we define fibers over a base element $b$ in $B$ as $F(b) = \{x \in \X\mid \pi(x) = b\}$, which is the set of points in $\X$ projecting onto $b$. Please see Fig. \ref{fig:restriction:fiber} for an example of a fiber on the torus $T^2 = S^1 \times S^1$ with base space $S^1$. We additionally define the method  $\textsc{Lift}: B \times F \rightarrow \X$, which takes a base element $b$ and a fiber element $f$ in $F(b)$ to the bundle space. In the case of product spaces, we can define $\textsc{Lift}(b,f) = (b,f)$. Second, we define path restrictions over a base path $p: I \rightarrow B$ as $r(p) = \{x \in \X \mid \pi(x) \in p[I]\}$, whereby $I$ is the unit interval and $p[I]$ is the image of the base path in $B$. Please see Fig. \ref{fig:restriction:path}. Third, we define graph restrictions over a graph $G_B = (V_B, E_B)$ on $B$ as $r(G_B) = \{x \in \X \mid \pi(x) \in e[I], e \in E_B\}$ whereby $V_B$ are vertices in $B$, $E_B$ is the set of edges in $B$ and $e[I]$ is the image of an edge on the base space. Fig. \ref{fig:restriction:graph} provides a visualization of a graph restriction (individual edge restrictions have different distances from torus for better visualization). For more details, please see \cite{Orthey2020IJRR} or \cite{steenrod_1951}. 

\subsection{Sparse Roadmaps}

To grow a sparse roadmap, we use the algorithm by Dobson and Bekris \cite{dobson_2014}. The sparse roadmap planner is similar to probabilistic roadmaps \cite{Kavraki1996, Karaman2011}, but uses a visibility region $\visRegion$, which consists of all feasible states in the hypersphere of radius $\visRegion$ around a state, to prune samples. To implement the pruning step, we add a new feasible sample if and only if it fulfills a sparseness condition. 

The sparseness condition consists of four elementary tests \cite{dobson_2014}. First, we test for coverage, meaning we add the sample if it does not lie in the visibility region of any sample in the graph. Second, we test for connectivity, meaning we add the sample, if it lies in multiple visibility regions, which belong to disconnected components of the sparse graph. Third, we test for interfaces, meaning we add the sample, if it lies in multiple visibility regions, which are not yet connected by an edge. Fourth and finally, we test for shortcuts, meaning we add the sample, if it provides proof of a shorter path through the free state space. We terminate the algorithm, if we either find a feasible path or if we fail $M$ consecutive times to add a sample to the sparse roadmap. For more details please see \cite{dobson_2014}.

The sparse roadmap planner is probabilistically complete and asymptotically near-optimality \cite{dobson_2014} and depends on the following parameters. First, the visibility region $\visRegion$, which is usually a fraction of the measure of the state space. Second, the maximum number of consecutive failures $M$. $M$ is important in the analysis of the algorithm, because it provides a probabilistic estimation of the free state space covered, which is defined as the percentage $1-\frac{1}{M}$ \cite{simeon_2002}. As an example, if we stop with $M=100$, our probabilistic estimate of the free state space covered is $99\%$.  Finally, we have an additional parameter for testing for shortcuts, which provides a trade-off between optimality and efficiency \cite{dobson_2014}.

\section{Sparse Multilevel Roadmaps}

\begin{algorithm}[t]
\caption{SMLR($\xi, \xg, \X_1,\ldots,\X_K$)}
\begin{algorithmic}[1]
  \State Let $\PriorityQueue$ be a \Call{priority\ queue}{}\Comment{Top is Max Value}\label{alg:smlr:priorityqueue}
  \For{$\Xcur$ in $\X_1,\ldots,\X_K$}\label{alg:smlr:forcur}
    \State $\PriorityQueue.\Call{push}{\Xcur, 1}$\label{alg:smlr:pushcur}
    \State $\Call{SectionTest}{\Xcur}$\label{alg:smlr:section}\Comment{See \cite{Orthey2020TRO}}
    \While{$\neg\Call{ptc}{\Xcur}$}\label{alg:smlr:while}
      \State $\Xtop = \PriorityQueue.\Call{pop}{}$\label{alg:smlr:poptop}
      \State $\xr \gets \Call{RestrictionSampling}{\Xtop}$\label{alg:smlr:restrictionsampling}
      \State $\Call{AddConditional}{\xr, \Gtop}$\label{alg:smlr:addconditional}     
      \State $i \gets \Call{ComputeImportance}{ \Xtop}$\label{alg:smlr:importance}\Comment{In $[0,1]$}
      \State $\PriorityQueue.\Call{push}{\Xtop, i}$\label{alg:smlr:pushtop}
    \EndWhile
  \EndFor
\end{algorithmic}\label{alg:smlr}
\end{algorithm}

\begin{algorithm}[t]
  \caption{RestrictionSampling($\Xk$)}
    \begin{algorithmic}[1]
    \If{\Call{Exists}{$\Xkk $}}\label{alg:restriction_sampling:exists}
        \def\xb{\x_{\text{base}}}
        \def\xf{\x_{\text{fiber}}}
        \State $e \gets \Call{SampleEdge}{\G_{k-1}}$\label{alg:restriction_sampling:sampleedge}
        \State $\xb \gets \Call{SampleUniform}{e}$\Comment{State on Edge}\label{alg:restriction_sampling:samplestate}
        \State $\bias \gets \Call{SmoothParameter}{0,\visRegion,\eta}$\label{alg:restriction_sampling:smoothparameter}
        \If{$\Call{Random}{0,1} < \bias/\visRegion$}
            \State $\xb \gets \Call{UniformNear}{\xb, \bias}$\label{alg:restriction_sampling:uniformnear}
        \EndIf
        \State $\xf \gets \Call{Sample}{\xb, \fiber_k}$\Comment{Element of $\fiber_k$}\label{alg:restriction_sampling:samplefiber}
        \State $\xr \gets \Call{Lift}{\xb, \xf}$\label{alg:restriction_sampling:lift}
        \Comment{Element of $\Xk$}
    \Else
        \State $\xr \gets \Call{Sample}{\Xk}$\label{alg:restriction_sampling:nobase}
    \EndIf
    \State \Return $\xr$\label{alg:restriction_sampling:return}
    \end{algorithmic}
  \label{alg:restriction_sampling}
\end{algorithm}

Let $(\xi, \xg, \X_1,\ldots,\X_K)$ be a fiber bundle sequence with $\xi$ and $\xg$ being start and goal state. Our task is to generalize the sparse roadmap planner \cite{dobson_2014} to fiber bundle sequences by growing $K$ graphs $(\G_1, \ldots, \G_K)$ on the bundle spaces $(\X_1,\ldots,\X_K)$, whereby we grow the $k$-th graph using restriction sampling \cite{Orthey2020IJRR} of the $(k-1)$-th graph. We call our algorithm the sparse multilevel roadmap planner (SMLR). SMLR depends on three parameters, the two parameters $\visRegion$ and $M$ from sparse roadmaps, and the additional parameter $\eta$, which we detail later.

We show the algorithm in Alg.~\ref{alg:smlr}. We start to create a priority queue (Line \algref{alg:smlr}{alg:smlr:priorityqueue}), which orders bundle spaces depending on an importance criterion $i$, which we detail later. We sort the queue such that the space with the maximum value is on top. We then iterate over the bundle spaces from $\X_1$ to $\X_K$ (Line \algref{alg:smlr}{alg:smlr:forcur}) and push the current space onto the priority queue with an importance of $1$ (Line \algref{alg:smlr}{alg:smlr:pushcur}). We then execute a section test (Line \algref{alg:smlr}{alg:smlr:section}), where we search for a feasible solution over the path restriction of the solution path (if any) on the previous bundle space $\X_{\text{cur}-1}$. The \textsc{SectionTest} method helps to overcome narrow passages, but is not essential for the understanding of this paper -- we use it as a black box within SMLR. Please see our previous publication \cite{Orthey2020TRO} for more information.

We then grow the roadmaps $(\G_1,\ldots,\G_{\text{cur}})$ as long as the planner terminate condition (PTC) of the current bundle space $\Xcur$ is not fulfilled (Line \algref{alg:smlr}{alg:smlr:while}). In our case, we terminate if a solution is found or if we reach either the infeasibility criterion or a time limit. Inside the while loop, we take the top bundle space $\Xtop$ with the largest importance value (Line \algref{alg:smlr}{alg:smlr:poptop}) and sample a random point using \textsc{RestrictionSampling} (Line \algref{alg:smlr}{alg:smlr:restrictionsampling}). We then add the point to the graph with \textsc{AddConditional} (Line \algref{alg:smlr}{alg:smlr:addconditional}), if it fulfills the sparseness condition \cite{dobson_2014}, which we detail in Sec.~\ref{sec:background}. Finally, we recompute the importance of the bundle space (Line
\algref{alg:smlr}{alg:smlr:importance}) and push the space back onto the queue (Line \algref{alg:smlr}{alg:smlr:pushtop}). 

The two methods \textsc{RestrictionSampling} and \textsc{ComputeImportance} are further detailed in the next two subsections. To facilitate understanding, we give first a brief overview of each. First, in \textsc{RestrictionSampling}, we restrict sampling on the bundle space by using information from the graph on its base space. We differ from dense roadmaps by using the visibility region of the sparse graph which depends on the visibility range $\visRegion$. Second, in \textsc{ComputeImportance}, we use the sampling density of the sparse graph together with the number of consecutive failures to estimate the importance of the bundle space and thereby its position in the priority queue. Next, we discuss each method in more detail and provide an analysis of the algorithm.

\subsection{Restriction Sampling with Visibility Regions}

Let $\X_k$ be a bundle space with graph $\G_k$, and let $\X_{k-1}$ be its base space with graph $\G_{k-1}$. To grow the graph $\G_k$, we use the framework of restriction sampling \cite{Orthey2020IJRR}. In restriction sampling, we sample states on $\X_k$ by uniformly sampling from the graph restriction of $\G_{k-1}$ (see Sec.~\ref{sec:background}). To give guarantees on asymptotic optimality, we would need the vertices of $\G_{k-1}$ to become dense in the free state space. 

To avoid using a dense graph for sampling \cite{dobson_2014} while giving guarantees on asymptotic near-optimality, we opt to exploit the graph visibility region. The visibility region of a graph $\G$ is the set $V(\G, \visRegion) = \{x \in \X \mid d(\x,e[I]) \leq \visRegion \text{ for some } e \text{ in } \G\}$, whereby $d$ is the metric on $\X$, $e$ is an edge from $\G$ and $e[I]$ is the image of the edge in $\X$.

\begin{wrapfigure}{r}{0.5\linewidth}
\includegraphics[width=0.95\linewidth]{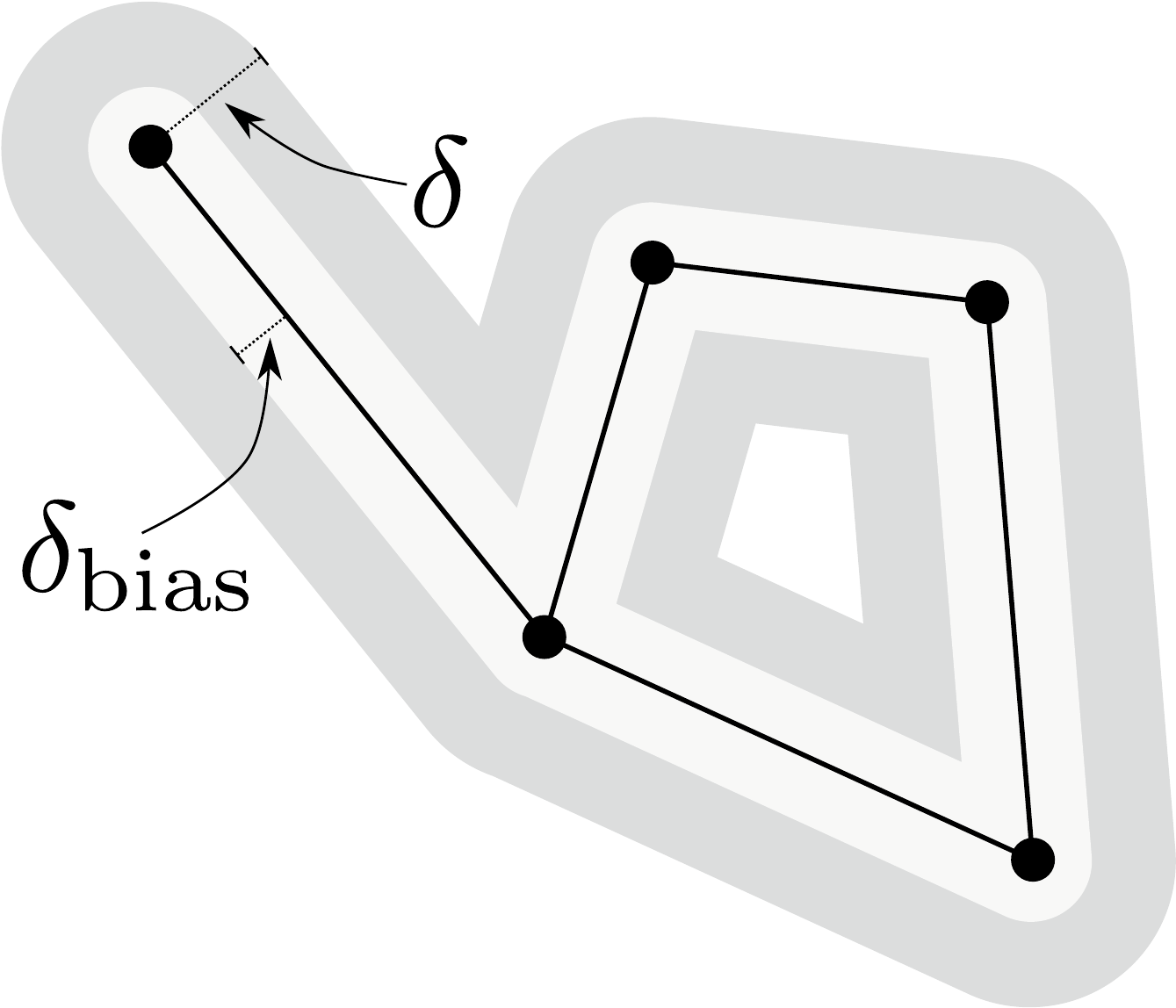}
\caption{Visibility region $V(\G, \visRegion)$ of a graph $\G$.\label{fig:visibilityregion}}
\end{wrapfigure}

To sample the graph visibility region, we use the  restriction sampling algorithm depicted in Alg. \ref{alg:restriction_sampling}. 
The algorithm requires an existing base graph $\G_{k-1}$ (Line \algref{alg:restriction_sampling}{alg:restriction_sampling:exists}), then samples a random state on a random edge (Line \algref{alg:restriction_sampling}{alg:restriction_sampling:sampleedge}).
Sampling the visibility region directly would be too uninformative. 
We thus use a smoothly varying parameter $\bias \in [0,\visRegion]$, which first restricts sampling to the sparse graph ($\bias=0$), then smoothly increase in each iteration until the whole visibility region $\delta$. 
This situation is visualized in Fig.~\ref{fig:visibilityregion}. To control the rate of change of $\bias$, we use the parameter $\eta$. 

In particular for narrow passages, it is often crucial to sample directly on the graph restriction. We thus sample the visibility region (Line \algref{alg:restriction_sampling}{alg:restriction_sampling:uniformnear}) only in a certain percentage of cases, depending on $\bias$. Once a base element is chosen, we sample a corresponding fiber space element (Line \algref{alg:restriction_sampling}{alg:restriction_sampling:samplefiber}), lift the states (Line \algref{alg:restriction_sampling}{alg:restriction_sampling:lift}) and return the state (Line \algref{alg:restriction_sampling}{alg:restriction_sampling:return}). If no base graph exists, we revert to a uniform sampling of the space (Line \algref{alg:restriction_sampling}{alg:restriction_sampling:nobase}). 

\subsection{Importance and Ordering of Bundle Spaces}

To grow sparse multilevel roadmaps, we need to decide which roadmap on which level we should grow next, i.e.~we need an ordering of bundle spaces. In prior work \cite{Orthey2020IJRR}, we advocated the use of an exponential importance criterion $i(\X_k) = 1/(|V_k|^{1/{n_k}}+1)$, with $|V_k|$ being the vertices on the graph $\G_k$ on $\X_k$ and $n_k$ being the dimensionality of $\X_k$, which was motivated by the sampling density of the graph which is proportional to $|V_k|^{1/n_k}$ \cite{Hastie2009}. 

However, sampling density is not good criterion for sparse roadmaps, because we care more about the coverage of the free space. To account for the coverage of the free space, we advocate an importance criterion using $M_k$, the number of consecutive sample failures. The number $M_k$ provides an estimate of the free space coverage, namely as the percentage $1-\frac{1}{M_k}$ \cite{Simeon2000}. The higher $M_k$, the less often we should sample $\X_k$. We formulate the importance criterion thus as 
\begin{equation}
    i(\X_k) = \frac{1}{M_k+1}.\label{eq:importance}
\end{equation}
Note that we stop the algorithm only if $M_k > M$ \emph{and} $\X_k$ is the current bundle space $\Xcur$. Since $i(\X_k)$ will eventually converge to zero, we ensure that every bundle space up until $k$ would be chosen infinitely many times. This is an important requirement to provide asymptotic guarantees of the algorithm.

\subsection{Analysis of Algorithm}

To prove SMLR to be asymptotically near-optimal and asymptotic sparse, we need to prove that restriction sampling with visibility regions is dense in the free state space of the last bundle space $\X$. Since the importance criterion in Eq.~\eqref{eq:importance} eventually converges to zero, we can thus ensure that we produce an infinite sampling sequence on the free state space $\Xf$. Therefore, when using sparse roadmap spanner \cite{dobson_2014} to grow the roadmap on $\X$, we retain all their properties, which include asymptotic near-optimality and asymptotic sparseness. However, we might reduce the number of vertices considerably.

Let us prove that restriction sampling with visibility regions is dense in the \emph{free} state space $\Xf$ on the fiber bundle $(\X,B,F,\pi)$. This argument can be applied recursively to prove the same for fiber bundle sequences \cite{Orthey2020IJRR}. Note that we use the set-theoretic definition of dense, which states that a set $A$ is dense in a space $\X$ if the intersection of $A$ with any non-empty open subset $U$ of $\X$ is non-empty \cite{munkres_1974}.

\begin{theorem}
Restriction sampling with visibility regions on $\X$ produces a sampling sequence $A = \{x_m\}$, which is dense in $\Xf$.
\end{theorem}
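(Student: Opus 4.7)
My plan is to prove density by induction on the bundle level and, at each level, to show that for every non-empty open set $U \subseteq \Xf$ the sampling sequence $A$ almost surely intersects $U$; density for the whole sequence then follows by applying this to a countable basis of $\Xf$. The base case is immediate: when no base graph exists the algorithm reverts to uniform sampling on $\X$, which is classically dense in $\Xf$. The work is therefore in the inductive step, where I treat a single bundle $(\X, B, F, \pi)$ and assume the sampling sequence on $B$ is already known to be dense in $B_{\text{free}}$.

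Three ingredients come together. First, because the base sampling sequence is dense in $B_{\text{free}}$, the sparse base roadmap $\G_B$ built by the sparse roadmap spanner~\cite{dobson_2014} eventually achieves $\visRegion$-coverage of $B_{\text{free}}$, i.e.~every feasible base point is within distance $\visRegion$ of some vertex or edge of $\G_B$. Second, since the paper restricts to product spaces $\X = B \times F$, any non-empty open $U \subseteq \Xf$ contains a basic open box $V \times W$ with $V$ open of positive measure in $B_{\text{free}}$ (using admissibility of $\pi$) and $W$ open of positive measure in $F$. Third, because \textsc{SmoothParameter} drives $\bias \uparrow \visRegion$ at a rate controlled by $\eta$, for all sufficiently late iterations $\bias$ is close enough to $\visRegion$ that the $\bias$-tube around the covering edge intersects $V$ in a set of positive measure.

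Combining these, from some iteration $m_0$ onward every draw on $\X$ has probability at least $p > 0$ of picking the good edge, perturbing within $\bias$ to land $\xb \in V$, and uniformly sampling a fiber element $\xf \in W$, after which \textsc{Lift} returns $(\xb,\xf) \in V \times W \subseteq U$. The importance criterion in Eq.~\eqref{eq:importance} ensures $\X$ is revisited infinitely often, so infinitely many such trials occur; a direct Borel--Cantelli argument (or the geometric bound $(1-p)^n \to 0$ on the probability of missing $U$ in every trial) then gives $\Pr[A \cap U \ne \emptyset] = 1$. Applying this to a countable basis of $\Xf$ yields almost sure density.

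The principal obstacle is the bookkeeping needed to align three asymptotic schedules---the convergence of $\G_B$ to $\visRegion$-coverage of $B_{\text{free}}$ (an ``eventually'' statement inherited from~\cite{dobson_2014}), the monotone schedule $\bias \uparrow \visRegion$, and the infinite revisit schedule induced by Eq.~\eqref{eq:importance}---so that the per-iteration hit probability is uniformly bounded below for all sufficiently late iterations. The most delicate piece is arguing that as $\G_B$ remains sparse rather than densifying, one can still guarantee that, for every basic open box $V \subseteq B$, some edge within $\visRegion$ of $V$ eventually appears in $\G_B$ and persists long enough to be sampled with non-vanishing probability once the bias schedule has ramped up.
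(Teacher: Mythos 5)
Your proposal follows essentially the same route as the paper's proof: project the open set to the free base space, use the fact that the sparse base graph's visibility region eventually covers it, argue that restriction sampling (with the bias ramping up to $\visRegion$) hits that projection with positive probability infinitely often, and then sample the fiber to land in $U$. You simply make explicit the probabilistic bookkeeping (countable basis, Borel--Cantelli, and the alignment of the coverage, bias, and revisit schedules) that the paper leaves informal.
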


\begin{proof}
Let $U$ be an arbitrary open set in $\Xf$. Since $\pi$ is admissible, the projection $\pi(U)$ of $U$ onto $B$ is an open subset of the free base space \cite{Orthey2019}. Since uniform sampling on $B$ with visibility regions will eventually cover the free base space \cite{Simeon2000}, $\pi(U)$ will be a subset of the visibility region of the graph on $B$. When the number of samples goes to infinity, we revert to uniform sampling of the graph restriction and will thus sample $\pi(U)$ infinitely many times. By sampling the fiber over $\pi(U)$, we thus eventually obtain a sample $x$ in $U$. Since $U$ was arbitrary, the sequence is dense in $\Xf$.
\end{proof}

\section{Evaluation\label{sec:evaluation}}
\begin{table}[!t]
\centering
\renewcommand{\cellrotangle}{90}
\renewcommand\theadfont{\bfseries}
\settowidth{\rotheadsize}{\theadfont 06D Bugtrap [infeasible]}
\newcolumntype{Y}{>{\raggedleft\arraybackslash}X}
\footnotesize\centering
\renewcommand{\arraystretch}{2.5}
\setlength\tabcolsep{3pt}
\begin{tabulary}{\linewidth}{@{}LCCCCCCCC@{}}
\toprule
 & \rothead{06D Bugtrap [feasible]} & \rothead{06D Bugtrap [infeasible]} & \rothead{06D drone [feasible]} & \rothead{06D drone [infeasible]} & \rothead{07D kuka [feasible]} & \rothead{07D kuka [infeasible]} & \rothead{34D PR2 [feasible]} & \rothead{34D PR2 [infeasible]} \\ 
\midrule
 \mbox{SMLR (ours)} & \makecell{ \textbf{4.37} \\ {$\scriptscriptstyle 10\mid 0\mid 0$} } & \makecell{ \textbf{2.47} \\ {$\scriptscriptstyle 0\mid 10\mid 0$} } & \makecell{ 0.23 \\ {$\scriptscriptstyle 10\mid 0\mid 0$} } & \makecell{ \textbf{0.72} \\ {$\scriptscriptstyle 0\mid 10\mid 0$} } & \makecell{ \textbf{1.42} \\ {$\scriptscriptstyle 10\mid 0\mid 0$} } & \makecell{ \textbf{5.34} \\ {$\scriptscriptstyle 0\mid 10\mid 0$} } & \makecell{ \textbf{9.25} \\ {$\scriptscriptstyle 9\mid 0\mid 1$} } & \makecell{ \textbf{0.32} \\ {$\scriptscriptstyle 0\mid 10\mid 0$} } \\ 
 \mbox{SPARS} & \makecell{ 60.00 \\ {$\scriptscriptstyle 0\mid 0\mid 10$} } & \makecell{ 60.00 \\ {$\scriptscriptstyle 0\mid 0\mid 10$} } & \makecell{ 0.37 \\ {$\scriptscriptstyle 10\mid 0\mid 0$} } & \makecell{ 60.00 \\ {$\scriptscriptstyle 0\mid 0\mid 10$} } & \makecell{ 33.66 \\ {$\scriptscriptstyle 6\mid 0\mid 4$} } & \makecell{ 60.00 \\ {$\scriptscriptstyle 0\mid 0\mid 10$} } & \makecell{ 60.00 \\ {$\scriptscriptstyle 0\mid 0\mid 10$} } & \makecell{ 60.00 \\ {$\scriptscriptstyle 0\mid 0\mid 10$} } \\ 
 \mbox{SPARStwo} & \makecell{ 60.00 \\ {$\scriptscriptstyle 0\mid 0\mid 10$} } & \makecell{ 60.00 \\ {$\scriptscriptstyle 0\mid 0\mid 10$} } & \makecell{ \textbf{0.16} \\ {$\scriptscriptstyle 10\mid 0\mid 0$} } & \makecell{ 60.00 \\ {$\scriptscriptstyle 0\mid 0\mid 10$} } & \makecell{ 34.86 \\ {$\scriptscriptstyle 7\mid 0\mid 3$} } & \makecell{ 60.00 \\ {$\scriptscriptstyle 0\mid 0\mid 10$} } & \makecell{ 60.00 \\ {$\scriptscriptstyle 0\mid 0\mid 10$} } & \makecell{ 60.00 \\ {$\scriptscriptstyle 0\mid 0\mid 10$} } \\ 
\bottomrule
\end{tabulary}

\caption{Runtime (in seconds) of motion planner averaged over $10$ runs with $60$s time limit. We additionally show how often a planner terminated with a status of feasible$\vert$infeasible$\vert$timeout 
on each scenario.\label{table:evaluation}}
\end{table}

\begin{figure*}
    \centering
\begin{subfigure}[t]{0.24\textwidth}
    \includegraphics[width=\textwidth]{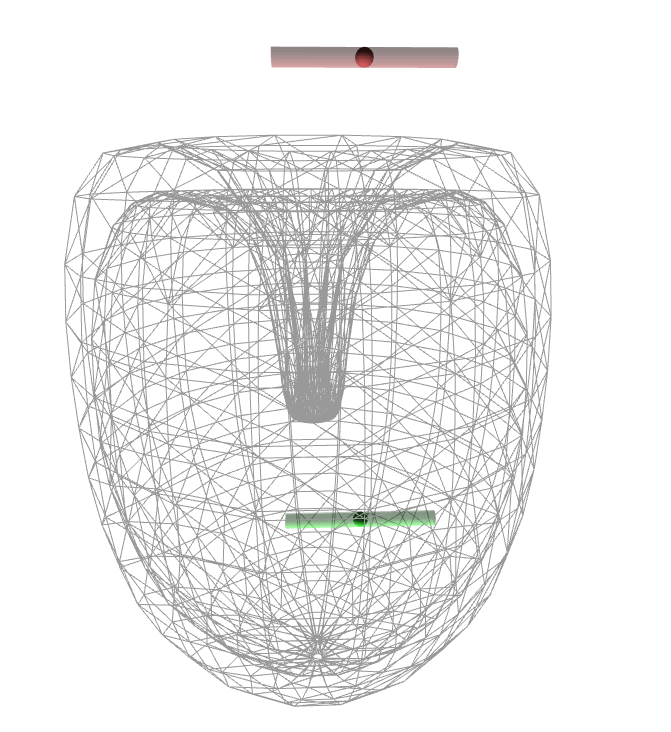}
    \includegraphics[width=\textwidth]{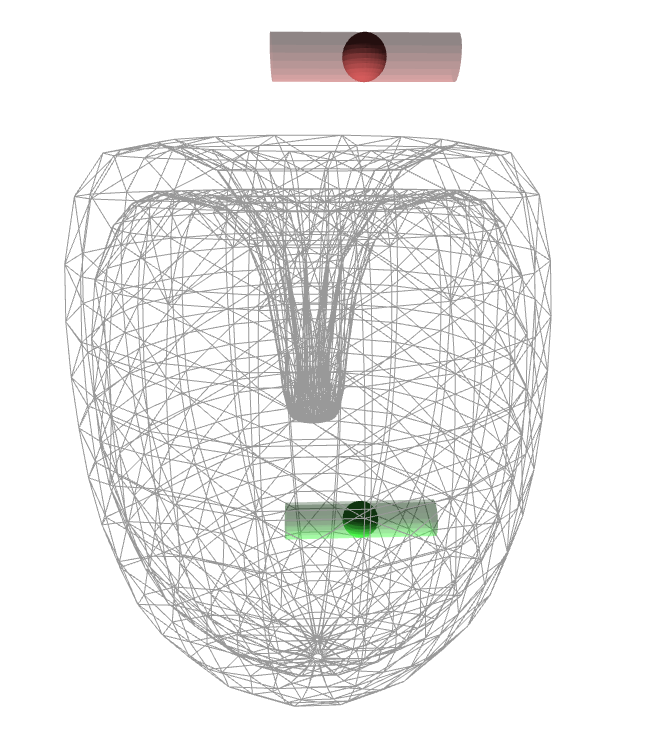}
\end{subfigure}
\begin{subfigure}[t]{0.24\textwidth}
    \includegraphics[width=\textwidth]{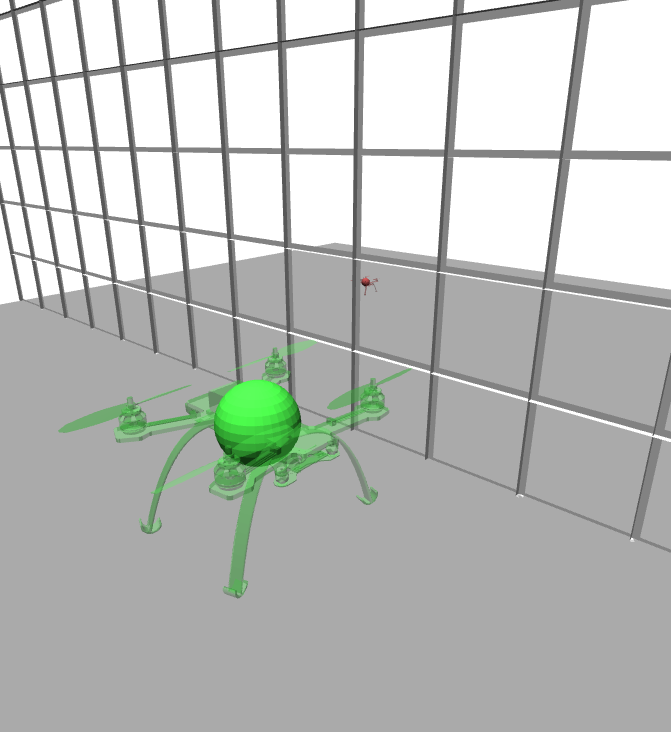}
    \includegraphics[width=\textwidth]{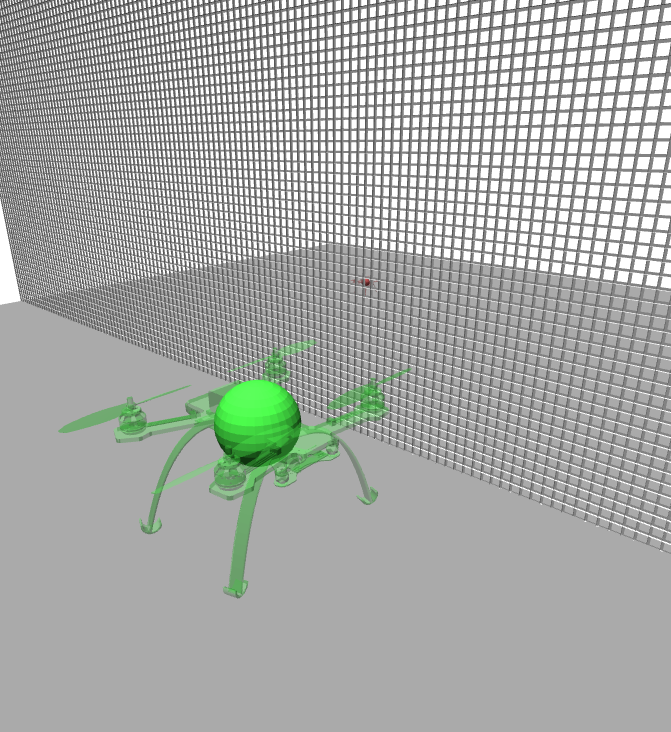}
\end{subfigure}
\begin{subfigure}[t]{0.24\textwidth}
    \includegraphics[width=\textwidth]{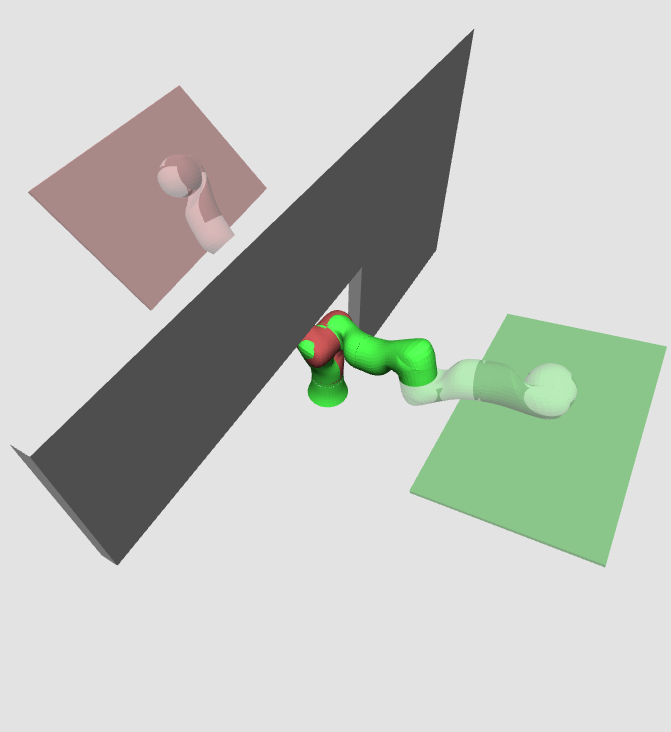}
    \includegraphics[width=\textwidth]{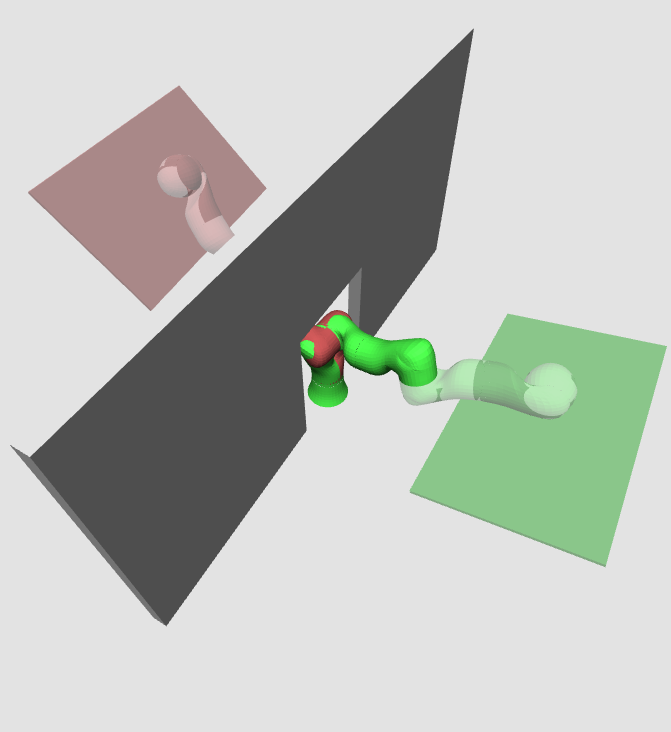}
\end{subfigure}
\begin{subfigure}[t]{0.24\textwidth}
    \includegraphics[width=\textwidth]{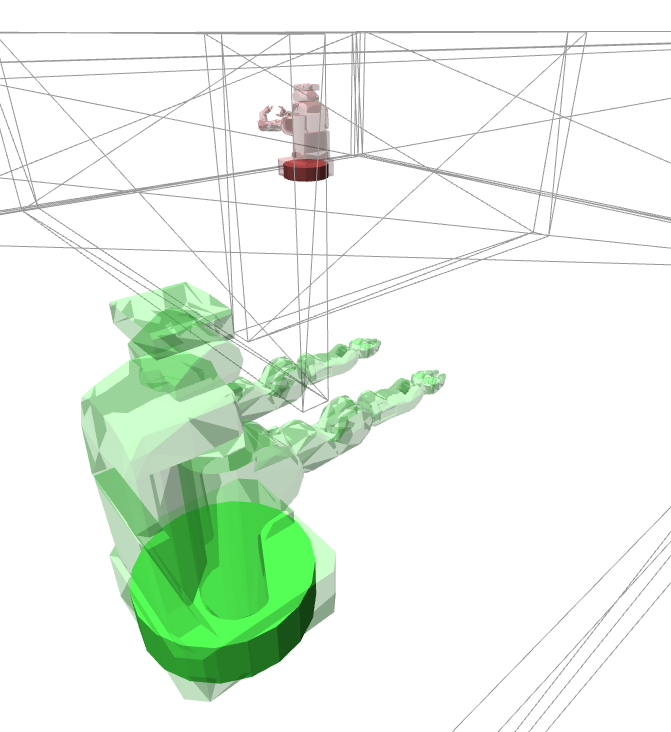}
    \includegraphics[width=\textwidth]{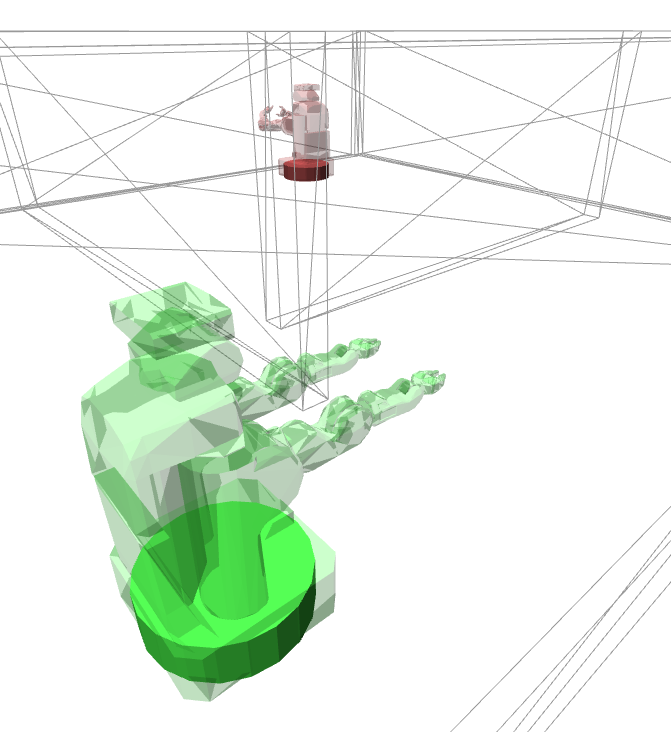}
\end{subfigure}
\caption{The eight scenarios used for evaluating our algorithm. Task for each scenario is to move robot from initial state (green) to goal state (red). Full robot geometry is shown as transparent color, the simplified version as non-transparent (Note that we use two simplifications for PR2, but only show one). \textbf{Top Row}: Feasible scenarios, where a solution exists. \textbf{Bottom Row}: Infeasible scenarios, where no solution exists. \textbf{Left to Right}: 6-dimensional Bugtrap, 6-dimensional drone, 7-dimensional KUKA LWR, 34-dimensional PR2.}
    \label{fig:scenarios}
\end{figure*}

To evaluate \algorithmName, we compare its performance on eight scenarios against the algorithms SPARS and SPARS2 from the open motion planning library (OMPL). Both SPARS and SPARS2 are the only algorithms in OMPL we know of which can return on infeasible scenarios while not timing out. To ensure a fair comparison, we set the parameters of SMLR, SPARS and SPARS2 all to $M=1000$, $\visRegion = 0.25\mu$ with $\mu$ being the measure of the state space (removing effects stemming from different parameter values). For SMLR, we use the parameter $\eta = 1000$ which designates how fast we expand the graph visibility region for restriction sampling. 

While we like our algorithm to correctly declare an infeasible problem as infeasible, we also like to make sure that the algorithm does not show false negatives, i.e. declaring a feasible problem to be infeasible. To ensure correctness, we always use two similar scenarios, one which is feasible and one which is infeasible. For all scenarios, we run each algorithm $10$ times with a time limit of $60$s. Our setup is a 8GB RAM 4-core 2.5GHz laptop running Ubuntu 16.04.

\subsection{6-dimensional Bugtrap}

Our first scenario is the classical narrow-passage Bugtrap scenario, where a cylindrical robot (the bug with $6$ degrees of freedom (dof)) has to escape a spherical object with a narrow exit (the trap), as shown in Fig.~\ref{fig:scenarios}. We use two versions, a feasible one with a bug which barely fits through the exit, and an infeasible one where the bug does not fit. As a simplification, we use an inscribed sphere which we describe using the fiber bundle $SE(3) \rightarrow \R^3$. We show the results in Table~\ref{table:evaluation}. While SMLR can solve (on average) both scenarios in $4.37$ and $2.47$s, respectively, both SPARS and SPARS2 time out after $60$s. 

\subsection{6-dimensional Drone}

In the second scenario, we use a free-floating drone with $6$-dof. The drone has to traverse a room which is separated by a net. In the first version of the problem, we make the net large enough to let the drone fly trough (the feasible problem). In the second version, we make the net finely woven to prevent the drone from passing (the infeasible problem). As a simplification, we use a sphere at the center of the drone. We model this situation with the fiber bundle $SE(3) \rightarrow \R^3$. For the feasible scenario, all three planners solve the problem with SPARS2 taking $0.16$s, SMLR taking $0.23$s and SPARS taking $0.37$s. In the infeasible scenario, only SMLR solves the problem in $0.72$s, while SPARS and SPARS2 both time out. 
\subsection{7-dimensional KUKA LWR}

In the third scenario, we use a fixed-base KUKA LWR robot with $7$-dof, which has to transport a windshield through a gap in a wall (Fig.~\ref{fig:scenarios}). We create two versions, a feasible one with the gap in the wall and an infeasible one where we close the gap. As a simplification, we use a projection onto the first two links of the manipulator arm, which we describe using the fiber bundle $\R^7 \rightarrow \R^3$. With our algorithm SMLR, we can solve both scenarios in $1.42$s and $5.34$s. For the feasible scenario, SPARS requires $33.66$s (but times out in $4$ cases) and SPARS2 requires $34.86$s (but times out in $3$ cases). Both SPARS algorithms time out for the infeasible scenario in all runs. 

\subsection{34-dimensional PR2}

In the fourth scenario, we use the mobile-base PR2 robot with $34$-dof, which has to enter a room with a small opening as shown in Fig.~\ref{fig:scenarios}. We use again two scenarios, the feasible one with the opening and an infeasible one where we close the opening. As a simplification, we use two projections, first we remove the arms of the robot and second we project onto the mobile base. We model this situation by the fiber bundle sequence $\R^{34} \rightarrow \R^{7} \rightarrow \R^{2}$. Our algorithm SMLR requires $9.25$s to solve the feasible scenario (but times out in $1$ case) and it requires $0.32$s to terminate on the infeasible scenario. Both SPARS and SPARS2 cannot solve any of the runs in the time limit given.
\section{Conclusion}

We presented the sparse multilevel roadmap planner (\algorithmName), which we believe to be the first algorithm to generalize sparse roadmap spanners \cite{dobson_2014} to fiber bundles \cite{Orthey2020IJRR}, which are models of multilevel abstractions. Our algorithm exploits multilevel abstraction using the notion of restriction sampling with visibility regions. We have shown SMLR to be asymptotically near-optimal and asymptotically sparse by showing restriction sampling to produces a dense sampling sequence. In evaluations, we showed SMLR to efficiently and correctly terminate on feasible and infeasible problems, even when those problems have narrow passages, intricate geometries or state spaces with dimensions of up to $34$-dof. 



\bibliographystyle{IEEEtranS}
{
\balance
\small
\bibliography{IEEEabrv, bib/general}
}
\end{document}